\newtheorem{theorem}{Theorem}
\newtheorem{definition}{Definition}
\newtheorem{lemma}{Lemma}
\def\BibTeX{{\rm B\kern-.05em{\sc i\kern-.025em b}\kern-.08em
    T\kern-.1667em\lower.7ex\hbox{E}\kern-.125emX}}
\begin{document}

\title{Modeling Time Series Dynamics with Fourier
Ordinary Differential Equations\\
}

\author
{\IEEEauthorblockN{Muhao Guo}
\IEEEauthorblockA{\textit{School of Electrical, Computer and Energy Engineering} \\
\textit{Arizona State University}\\
Tempe, United States \\
mguo26@asu.edu}
\and
\IEEEauthorblockN{Yang Weng}
\IEEEauthorblockA{\textit{School of Electrical, Computer and Energy Engineering} \\
\textit{Arizona State University}\\
Tempe, United States\\
yang.weng@asu.edu}
}

\maketitle

\begin{abstract}
Neural ODEs (NODEs) have emerged as powerful tools for modeling time series data, offering the flexibility to adapt to varying input scales and capture complex dynamics. However, they face significant challenges: first, their reliance on time-domain representations often limits their ability to capture long-term dependencies and periodic structures; second, the inherent mismatch between their continuous-time formulation and the discrete nature of real-world data can lead to loss of granularity and predictive accuracy. To address these limitations, we propose Fourier Ordinary Differential Equations (FODEs), an approach that embeds the dynamics in the Fourier domain. By transforming time-series data into the frequency domain using the Fast Fourier Transform (FFT), FODEs uncover global patterns and periodic behaviors that remain elusive in the time domain. Additionally, we introduce a learnable element-wise filtering mechanism that aligns continuous model outputs with discrete observations, preserving granularity and enhancing accuracy. Experiments on various time series datasets demonstrate that FODEs outperform existing methods in terms of both accuracy and efficiency. By effectively capturing both long- and short-term patterns, FODEs provide a robust framework for modeling time series dynamics.
\end{abstract}

\begin{IEEEkeywords}
Neural ODEs, Fourier, Time Series
\end{IEEEkeywords}

\section{Introduction}
Many time series data exhibit long-term trends and periodic patterns spanning the entire dataset \cite{weng2022transform}.
Traditional methods often fail to capture such periodic and extended temporal dependencies, leading to suboptimal predictions. These challenges arise across diverse domains, such as energy \cite{li2025exarnn} and healthcare \cite{guo2022patients}, further motivating the need for flexible and expressive approaches capable of uncovering both global and local structures in time series data. 

Neural ODEs ~\cite{chen2018neural} offer a powerful alternative to discrete-layer models through their continuous-depth formulation. Instead of stacking fixed, discrete layers, NODEs treat the hidden representation as evolving along a continuous trajectory, modeled by an ordinary differential equation (ODE). Concretely, let $h(t) \in \mathbb{R}^N$ denote the hidden state at time $t$. NODEs use a parameterized function $f(h(t), t, \theta)$, often instantiated as a neural network with learnable parameters $\theta$, to describe the time evolution of $h(t)$. Formally, one writes
$\frac{d h(t)}{dt} = f\bigl(h(t), t, \theta \bigr).$
Given an initial value $h(t_0)$, the hidden state at any future time $t_1$ can be computed via
$h(t_1) = h(t_0) + \int_{t_0}^{t_1} f_{\theta}\bigl(h(t), t \bigr)\,dt \nonumber = \text{Solver}\bigl(h(t_0), f_{\theta}, t_0, t_1 \bigr)$.
This flexibility enables NODEs to adapt to varying input scales and shapes, balancing numerical precision with computational efficiency~\cite{kidger2022neural, xia2021heavy, guo2023continuous, dupont2019augmented}. Similar considerations—such as transparent reasoning and evaluation of failure modes—are increasingly emphasized in LLM-driven systems \cite{guo2024bias, guo2025solar}.

\begin{figure*}[t]
\centering 
\includegraphics[width=1.8\columnwidth]{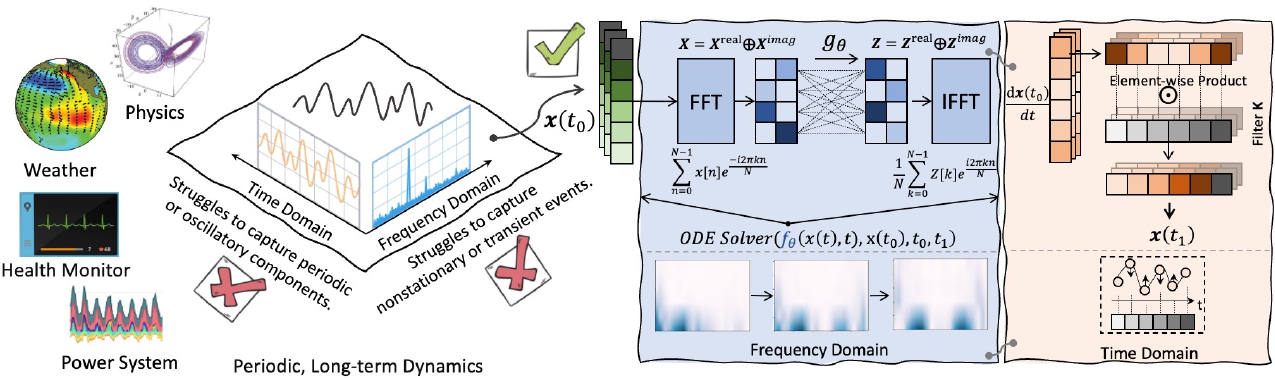}
\caption{Schematic of the proposed method. 
Blue region: The input time series \(x(t_0)\) is first transformed to the frequency domain via FFT, where a neural operator learns the dynamics. An inverse FFT (IFFT) then maps the representation back to the time domain. 
Orange region: A learnable element-wise filter 
\(K\) refines the final prediction \(x(t_1)\). This design leverages both frequency and time-domain operations to capture complex patterns in the data.}
\label{structure}
\end{figure*}

Despite these advantages, NODEs face two primary limitations when modeling time series data. First, because NODEs typically learn representations in the time domain, they can struggle to capture global structures that evolve over long intervals or across multiple frequencies. While the time-domain perspective is effective at modeling local temporal dynamics, it may not fully reveal the broad, periodic, or long-range dependencies that are crucial in many applications. Second, although NODEs excel at continuous-time generalization, real-world time series data are sampled discretely \cite{guo2023msq, guo2024bayesian, guo2023graph}. This misalignment between continuous modeling and discrete observations can lead to a loss of granularity and inaccuracies when reconstructing the original signals. 

To address the first limitation, analyzing time series data in the Fourier domain provides a powerful way to identify and model global structures~\cite{zhou2022fedformer}.  
The Fourier transform decomposes a signal into its constituent frequencies, enabling the discovery of dominant periodic components and long-range dependencies. Such frequency-based representations often uncover patterns that remain hidden in purely time-based approaches. Recent work~\cite{lee2021fnet, zhou2022fedformer} demonstrates that incorporating Fourier analysis into deep architectures can improve generalization and capture global relationships, with applications ranging from transformer-based networks~\cite{vaswani2017attention} to language modeling on the GLUE benchmark~\cite{guo2024transparent, wang2018glue}.

For the second limitation, reconciling continuous modeling with discrete observations—element-wise filtering (via the Hadamard product) proves effective~\cite{horn1990hadamard}. 
By multiplying continuous model outputs with data-derived correction factors or masks, this filtering step enforces alignment between discrete samples and continuous trajectories. It also highlights important features, amplifying relevant parts of the signal while suppressing noise or less critical components. This operation bridges the gap between the NODE's continuous formulation and the discrete timestamps in real-world time series, ensuring both flexibility and fidelity to the observed data. 

Therefore, we propose Fourier Ordinary Differential Equations (FODEs) to capture global patterns and preserve the discrete granularity of time series data (see Figure~\ref{structure}). FODEs embed the hidden representations in the Fourier domain, leveraging frequency decompositions to reveal long-term dependencies and periodic behaviors. At the same time, element-wise filtering refines the continuous outputs, aligning them with the inherent discrete nature of real-world observations. This hybrid approach unites the strengths of both frequency and time-based modeling, offering an effective solution for complex time series tasks such as forecasting, classification, and anomaly detection.

\section{Methodology}
\subsection{Discrete Fourier Transform}
We commence by introducing the Discrete Fourier Transform (DFT) \cite{winograd1978computing}, a fundamental tool in digital signal processing. The DFT is applied to a sequence of $N$ complex numbers $x \lbrack n \rbrack$, where $0 \leq n \leq N-1$, to convert it into the frequency domain. The 1D DFT is defined as follows:
\begin{equation}
X \lbrack k \rbrack = \sum^{N-1}_{n=0} x\lbrack n \rbrack e^{-i 2\pi k n / N},
\end{equation}
where $i$ denotes the imaginary unit. The DFT maps the input sequence $x \lbrack n \rbrack$ to its spectrum $X \lbrack k \rbrack$ at the frequency $\omega_k = \frac{2 \pi k }{N}$. Since $X\lbrack k\rbrack$ repeats on intervals of length $N$, it suffices to consider the values of $X\lbrack k \rbrack$ at $N$ consecutive points $k = 0, 1, \cdots, N-1$.
The DFT is a one-to-one transformation, meaning that given $X \lbrack k \rbrack$, we can recover the original signal $x \lbrack n \rbrack$ using the inverse DFT (IDFT):
\begin{equation}
x \lbrack n \rbrack = \frac{1}{N} \sum^{N-1}_{k=0} X \lbrack k \rbrack e^{i 2\pi k n / N}.
\end{equation}

The DFT's significance lies in its application to signal processing algorithms, particularly within two important contexts. Firstly, the DFT operates on discrete inputs and produces discrete outputs, making it computationally suitable for digital signal processing. Secondly, the development of efficient algorithms, such as the Fast Fourier Transform (FFT) \cite{brigham1988fast}, has revolutionized DFT computation. The FFT exploits the symmetry properties of the DFT and employs a divide-and-conquer approach \cite{gorlatch1998programming}, recursively breaking down the DFT into smaller subproblems. This approach drastically reduces the computational complexity from $O(N^2)$ to $O(N\log N)$ \cite{rao2021global}. Notably, the inverse DFT, which exhibits a similar structure as the DFT, can also be efficiently computed using the inverse Fast Fourier Transform (IFFT). These advancements in DFT and FFT techniques have significantly enhanced the efficiency and practicality of signal-processing algorithms across various domains.

\subsection{Construct Dynamics in Fourier Domain}

Analyzing data in the Fourier domain allows us to capture patterns spanning the entire dataset and reveal long-term dependencies between variables \cite{zhou2022fedformer}. 
Here, time series are decomposed into frequency components, where dominant frequencies and cross-frequency interactions become more explicit than in the time domain. 
Building on this perspective, we introduce Fourier Ordinary Differential Equations (FODE) to learn dynamics directly in the frequency domain. 
Given input data $x$ and an implicit initial time $t_0$, its Fourier representation $X$ can be written as:
\begin{equation}
X(k, t_0) = \sum^{N-1}_{n=0} x(n, t_0) e^{-i 2\pi k n / N}.
\end{equation}

$X$ is the complex tensor and represents the spectrum of $x$. For real input $x \lbrack n \rbrack$, its DFT is conjugate symmetric \cite{rao2021global}, i.e. $X\lbrack N-k \rbrack = X^{*}\lbrack k \rbrack$. The reverse is true as well: if we perform IDFT to $X\lbrack k \rbrack$ which is conjugate symmetric, a real discrete signal can be recovered. This property implies that the half of the DFT $\{X \lbrack k \rbrack : 0 \leq k \leq \lceil \frac{N}{2} \rceil \}$ contains the full information about the frequency characteristics of $x\lbrack n \rbrack$. 
Suppose the real and imaginary parts of $X \lbrack k \rbrack$ are $X \lbrack k \rbrack ^{real}$ and $X \lbrack k \rbrack ^{imag}$, respectively. The complex numbers represent the spectrum of the signal in the Fourier domain, which provides information about both the amplitude and phase of the frequency components present in the original signal. The magnitude of the complex numbers represents the strength or magnitude of each frequency component, while the phase represents the phase shift or timing information associated with each component. 
We concatenate the $X \lbrack k \rbrack ^{real}$ and $X \lbrack k \rbrack ^{imag}$ together by 
\begin{equation}
    X^{info} = X \lbrack k \rbrack ^{real} \oplus X \lbrack k \rbrack ^{imag}, 
\end{equation}
where the $\oplus$ represents the concatenate symbol. Thus, $X^{info}$ contains the real and imaginary part information without the imaginary symbol $i$. We aim to learn a mapping $g: X \rightarrow Z$:
\begin{equation}
    Z^{info} = g(X^{info}).
\end{equation}
We do this by means of a basic neural network, i.e., $g(\cdot)$ is a basic neural network, which can be implemented by a Multilayer Perceptron (MLP) in practice.

The obtained tensor $Z^{info}$ contains the information in the Fourier domain, thus we separate it to extract the real and ``imaginary" part by: 
\begin{equation}
    Z \lbrack k \rbrack ^{real} \oplus Z \lbrack k \rbrack ^{imag} = Z^{info}.
\end{equation}
Note that the $Z \lbrack k \rbrack ^{imag}$ does not contain the imaginary unit $i$, so we construct the complex tensor by applying an imaginary unit $i$ on the ``imaginary" part $Z \lbrack k \rbrack ^{imag}$ and obtain a real complex tensor $Z \lbrack k \rbrack = Z \lbrack k \rbrack ^{real} + i Z \lbrack k \rbrack ^{imag}$. The complex tensor $Z \lbrack k \rbrack$ can be seen as a representation in the Fourier space.
Finally, we can map back to the time domain by applying the inverse Fast Fourier Transform (IFFT):
\begin{equation}
  z \lbrack n \rbrack = \frac{1}{N} \sum^{N-1}_{k=0} Z \lbrack k \rbrack e^{i 2\pi k n / N}.
\end{equation}

\subsection{Fourier Ordinary Differential Equations}
Building on the Fourier dynamics, we define the dynamic function $f$ using the Fast Fourier Transform (FFT), a neural network $g(\cdot)$, and the Inverse FFT (IFFT). 
This function models changes in the Fourier space as a function of the data $x$ and an auxiliary time variable $t$, rather than the explicit timestamps in the data.
The system state at time $t_1$ is obtained by solving an initial value problem (IVP) with an ODE solver:
\begin{align}
x(t_1) \nonumber
&= x(t_0) + \int_{0}^{t_1} f_{\theta}(x, t)\,dt \\ 
&= \text{Solver}(x(t_0), f_{\theta}, t_0, t_1),
\label{ODEsolver}
\end{align}
where $x(t_0)$ is the initial state, $f$ the dynamic function, and $\theta$ the parameters of $g(\cdot)$. The ODE solver integrates forward from $x(t_0)$ to approximate the solution.

\label{subsec:fode}
\begin{definition}[Fourier ordinary differential equation (FODE)]
\label{def:fode}
Let $x_0\in\mathbb{R}^{N}$ be an initial time–series segment observed at (implicit) time
$t_0$.
A \emph{FODE} describes the evolution of a hidden state
$x:[t_0,t_1]\to\mathbb{R}^{N}$, with $x(t_0)=x_0$, through
\begin{equation}\label{eq:fode}
  \frac{\mathrm{d}x(t)}{\mathrm{d}t}=f_{\mathrm{FODE}}\bigl(x(t),t;\,\theta_g\bigr),
\end{equation}
where $f_{\mathrm{FODE}}$ is constructed in Fourier space:
\begin{align}
  f_{\mathrm{FODE}}\bigl(x,t;\theta_g\bigr)
    &=\operatorname{IFFT}\!\Bigl(
        \mathcal{M}\!
        \bigl(
          g\bigl(\mathcal{P}(\operatorname{FFT}(x)),\,t;\theta_g\bigr)
        \bigr)
      \Bigr).                             \label{eq:fode_impl}
\end{align}

$\operatorname{FFT},\operatorname{IFFT}$ denote the (inverse) fast Fourier transform; $\mathcal{P}$ concatenates $\Re$ and $\Im$ parts into one real vector ($X^{\text{info}}$); $g(\cdot,t;\theta_g)$ is a neural network that produces $Z^{\text{info}}$; $\mathcal{M}$ reconstructs the complex spectrum $Z[k]$, enforcing the conjugate‑symmetry constraint $Z[N-k]=\overline{Z[k]}$ so that the final $\operatorname{IFFT}$ yields a real signal.
Given~\eqref{eq:fode}, the state at $t_1$ is obtained with any ODE solver
\[
  x(t_1)
  =\text{ODESolver}\!\bigl(x(t_0),f_{\mathrm{FODE}},t_0,t_1;\theta_g\bigr).
\]
\end{definition}

\begin{lemma}[Lipschitz continuity of $f_{\mathrm{FODE}}$]
\label{lem:lipschitz_fode}
Assume (1) $g(\cdot,t;\theta_g)$ is $L_g$‑Lipschitz in its first argument for every fixed~$t$, and (2) $\mathcal{P}$ and $\mathcal{M}$ are $L_{\mathcal{P}}$‑ and $L_{\mathcal{M}}$‑Lipschitz, all with respect to the Euclidean norm.  
Because $\operatorname{FFT}$ and $\operatorname{IFFT}$ are bounded linear
operators, the composition in~\eqref{eq:fode_impl} is
$L_f$‑Lipschitz in $x$, where
\[
  L_f
  \;\le\;
  \bigl\lVert\operatorname{IFFT}\bigr\rVert\,
  L_{\mathcal{M}}\,
  L_g\,
  L_{\mathcal{P}}\,
  \bigl\lVert\operatorname{FFT}\bigr\rVert.
\]
\end{lemma}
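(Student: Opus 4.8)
The plan is to prove the Lipschitz bound by exploiting the elementary fact that a composition of Lipschitz maps is Lipschitz, with the composite constant bounded by the product of the individual constants. First I would write $f_{\mathrm{FODE}} = \operatorname{IFFT}\circ \mathcal{M}\circ g(\cdot,t;\theta_g)\circ \mathcal{P}\circ\operatorname{FFT}$, viewing each stage as a map between finite-dimensional Euclidean spaces (identifying $\mathbb{C}^N$ with $\mathbb{R}^{2N}$ in the obvious way so that the complex FFT becomes a real linear map). Then for any $x,y\in\mathbb{R}^N$ I would estimate $\lVert f_{\mathrm{FODE}}(x,t)-f_{\mathrm{FODE}}(y,t)\rVert$ by peeling off one layer at a time: bound the outer $\operatorname{IFFT}$ by its operator norm, then $\mathcal{M}$ by $L_{\mathcal{M}}$, then $g$ by $L_g$ (using hypothesis (1), which holds for the fixed $t$), then $\mathcal{P}$ by $L_{\mathcal{P}}$, and finally $\operatorname{FFT}$ by its operator norm applied to $\lVert x-y\rVert$.

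The key intermediate steps, in order, are: (i) observe that $\operatorname{FFT}$ and $\operatorname{IFFT}$, being linear maps on a finite-dimensional space, are bounded, hence Lipschitz with constant equal to their operator norm $\lVert\operatorname{FFT}\rVert$, $\lVert\operatorname{IFFT}\rVert$; (ii) note that the time argument $t$ is held fixed throughout, so $x\mapsto g(\mathcal{P}(\operatorname{FFT}(x)),t;\theta_g)$ is a composition in the first argument only and hypothesis (1) applies verbatim; (iii) chain the Lipschitz estimates: for composable Lipschitz maps $A,B$ one has $\lVert A(B(u))-A(B(v))\rVert \le \operatorname{Lip}(A)\,\lVert B(u)-B(v)\rVert \le \operatorname{Lip}(A)\operatorname{Lip}(B)\,\lVert u-v\rVert$, and induct over the five stages; (iv) conclude $L_f \le \lVert\operatorname{IFFT}\rVert\,L_{\mathcal{M}}\,L_g\,L_{\mathcal{P}}\,\lVert\operatorname{FFT}\rVert$, which is exactly the claimed bound.

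There is essentially no deep obstacle here — the statement is a routine chaining argument — but the one point deserving care is the identification of the complex transforms with real linear operators so that the Euclidean-norm Lipschitz constants compose cleanly with the real-valued maps $\mathcal{P}$, $g$, and $\mathcal{M}$. Concretely, I would fix the convention that $\operatorname{FFT}:\mathbb{R}^N\to\mathbb{R}^{2N}$ already includes the split into real and imaginary parts (or absorb this into $\mathcal{P}$, whichever the paper intends) and likewise that $\mathcal{M}$ outputs a real-embedded spectrum fed to $\operatorname{IFFT}:\mathbb{R}^{2N}\to\mathbb{R}^N$; then every arrow in the chain is a map between real Euclidean spaces and the operator norms $\lVert\operatorname{FFT}\rVert,\lVert\operatorname{IFFT}\rVert$ are well-defined finite constants (indeed, with the standard normalization they are $\sqrt{N}$ and $1/\sqrt{N}$ respectively, so their product is $1$, though the lemma does not need this). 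A secondary, even more minor point is that the conjugate-symmetry enforcement inside $\mathcal{M}$ is itself a linear projection and therefore does not spoil Lipschitz continuity; I would simply remark that it is absorbed into $L_{\mathcal{M}}$. With these conventions pinned down, the proof is a three-line display.
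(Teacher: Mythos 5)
Your proposal is correct and follows essentially the same route as the paper: both prove the bound by writing $f_{\mathrm{FODE}}$ as the five-stage composition, using that FFT/IFFT are bounded linear (hence Lipschitz) maps and that Lipschitz constants multiply under composition, with the conjugate-symmetry step inside $\mathcal{M}$ noted to be linear and harmless. Your extra care with the real/complex identification ($\mathbb{C}^N\cong\mathbb{R}^{2N}$) is a welcome clarification but does not change the argument.
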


\begin{proof}
Both FFT and IFFT are linear maps with bounded operator norms.
Lipschitz constants multiply under composition, so the bound above
follows directly.  Conjugate symmetry is preserved because
$\mathcal{M}$ constructs the missing half of the spectrum by complex
conjugation; this step is linear and thus Lipschitz with constant~1.
\end{proof}

\begin{theorem}[Existence and uniqueness]
\label{thm:wellposed_fode}
If the conditions of Lemma~\ref{lem:lipschitz_fode} hold, then
$f_{\mathrm{FODE}}(\cdot,t;\theta_g)$ is globally Lipschitz in~$x$ and
continuous in~$t$.  Hence, by the Picard–Lindelöf theorem, the IVP
\eqref{eq:fode} admits a unique solution
$x:[t_0,t_1]\to\mathbb{R}^{N}$.
\end{theorem}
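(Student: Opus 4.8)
The plan is to verify the two hypotheses of the Picard–Lindelöf theorem — global Lipschitz continuity in $x$ (uniformly in $t$) and continuity in $t$ — and then simply invoke the theorem to conclude existence and uniqueness of the solution on $[t_0,t_1]$. The Lipschitz-in-$x$ part is essentially already done: Lemma~\ref{lem:lipschitz_fode} gives that $f_{\mathrm{FODE}}(\cdot,t;\theta_g)$ is $L_f$-Lipschitz with $L_f \le \lVert\operatorname{IFFT}\rVert\,L_{\mathcal{M}}\,L_g\,L_{\mathcal{P}}\,\lVert\operatorname{FFT}\rVert$, and crucially this bound does not depend on $t$ — the constants $\lVert\operatorname{FFT}\rVert$, $\lVert\operatorname{IFFT}\rVert$, $L_{\mathcal P}$, $L_{\mathcal M}$ are purely geometric, and $L_g$ is the Lipschitz constant of $g$ in its \emph{first} argument for every fixed $t$. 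So the first bullet would be: state that $L_f$ is a finite constant independent of $t$, hence $f_{\mathrm{FODE}}$ is globally (uniformly) Lipschitz in $x$.

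Next I would address continuity in $t$. The only $t$-dependence in~\eqref{eq:fode_impl} enters through the second argument of $g$, so it suffices to assume (or note as part of the standing hypotheses) that $g(x,\cdot;\theta_g)$ is continuous in $t$ for every fixed $x$; then $t\mapsto f_{\mathrm{FODE}}(x,t;\theta_g)$ is continuous as a composition of the continuous map $g(x,\cdot;\theta_g)$ with the bounded linear operators $\operatorname{IFFT}\circ\mathcal M$. Combining this pointwise-in-$t$ continuity with the uniform-in-$t$ Lipschitz bound in $x$ gives joint continuity of $f_{\mathrm{FODE}}$ on $\mathbb{R}^N\times[t_0,t_1]$ (a standard fact: a family of uniformly Lipschitz maps that is pointwise continuous in the parameter is jointly continuous). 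I would spell out this elementary argument in one or two lines via the triangle inequality $\lVert f(x,t)-f(x',t')\rVert \le L_f\lVert x-x'\rVert + \lVert f(x',t)-f(x',t')\rVert$.

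With both hypotheses in hand, the conclusion is immediate: the Picard–Lindelöf (Cauchy–Lipschitz) theorem applies to the IVP~\eqref{eq:fode} with initial condition $x(t_0)=x_0$, and because the Lipschitz constant is \emph{global} (not merely local), the maximal solution does not blow up in finite time and therefore extends to the whole compact interval $[t_0,t_1]$; uniqueness on that interval follows from the same theorem (or a one-line Grönwall argument). The main obstacle — really the only subtle point — is that continuity in $t$ is not actually among the hypotheses transported from Lemma~\ref{lem:lipschitz_fode}, which only concerns Lipschitzness in the first argument; so I would either fold ``$g(x,\cdot;\theta_g)$ continuous in $t$'' into the statement as an explicit (mild) assumption, or remark that for the typical instantiation where $g$ is an MLP taking $t$ as an input feature, continuity in $t$ is automatic. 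Everything else is routine bookkeeping with composition of Lipschitz/continuous maps.
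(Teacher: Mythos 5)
Your proposal is correct and follows essentially the same route as the paper: verify global Lipschitz continuity in $x$ via Lemma~\ref{lem:lipschitz_fode} and continuity in $t$, then invoke Picard--Lindel\"of on the compact interval $[t_0,t_1]$. You are in fact more careful than the paper's one-line proof, which simply asserts continuity in $t$ without noting (as you do) that it must come from an additional mild assumption on $g(x,\cdot;\theta_g)$, automatic for the MLP instantiation used in practice.
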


\begin{proof}
Picard–Lindelöf (a.k.a.\ Cauchy–Lipschitz) applies directly once global
Lipschitz continuity in~$x$ and continuity in~$t$ are established.
\end{proof}

Because~\eqref{eq:fode} defines a reversible flow, gradients with respect to both the network parameters $\theta_g$ and the initial state $x(t_0)$
can be obtained efficiently via the adjoint method
\cite{pontryagin1961mathematical}, which solves an auxiliary ODE
backwards in time without having to store intermediate states.

\subsection{Element-Wised Filter} 
To enhance and refine the outcomes, we introduce an element-wise filter as a proposed approach. The filter is applied to the input $x(t_1)$ using a learnable filter matrix $K$. The operation is defined as follows:
\begin{equation}
\hat{x}(t_1) = K \odot x(t_1),
\end{equation}
where $\odot$ represents the element-wise multiplication, also known as the Hadamard product. The filter matrix $K$ has the same dimensions as $x(t_1)$ and acts as a filter for individual elements. The resulting vector $\hat{x}(t_1)$ represents the refined output. For exploration purposes without introducing biases, we initialize the filter matrix $K$ with a uniform distribution, enabling exploration of the solution space.
We show the pseudocode of our method in Algorithm \ref{Pseudocode_of_FODE}.

\begin{algorithm}[H]
    \caption{Pseudocode of FODE}
    \begin{algorithmic}[] 
    \STATE \textbf{Input}: $t_0$, $t_1$, data: $\lbrace x_n \rbrace = x_0, x_1, ... , x_{n}$
    \STATE \textbf{Parameters}: $W$, $\theta$
    \STATE {\color{brown}\textbf{Construct} \textit{f}: $x\lbrack n \rbrack \rightarrow z\lbrack n \rbrack$}
        \STATE $\qquad$ \textbf{(FFT)} $X \lbrack k \rbrack = \sum^{N-1}_{n=0} x\lbrack n \rbrack e^{-i 2\pi k n / N}$
        \STATE $\qquad$ $X \lbrack k \rbrack ^{real}, X \lbrack k \rbrack ^{imag} = real(X \lbrack k \rbrack), imag(X \lbrack k \rbrack)$
        \STATE $\qquad$ $X^{info} = X \lbrack k \rbrack ^{real} \oplus X \lbrack k \rbrack ^{imag}$
        \STATE $\qquad$ $Z^{info} = g(X^{info}, \theta_g)$
        \STATE $\qquad$ $Z \lbrack k \rbrack ^{real} \oplus Z \lbrack k \rbrack ^{imag} = Z^{info}$
        \STATE $\qquad$ $Z \lbrack k \rbrack = Z \lbrack k \rbrack ^{real} + i Z \lbrack k \rbrack ^{imag}$
        \STATE $\qquad$ \textbf{(IFFT)} $z \lbrack n \rbrack = \frac{1}{N} \sum^{N-1}_{k=0} Z \lbrack k \rbrack e^{i 2\pi k n / N}$
    \STATE $x(t_1) = ODESolver(x(t_0), {\color{brown}\textit{f}}, t_0, t_1, \theta)$
    \STATE \textbf{output}: $\hat{x}(t_1) = K \odot x(t_1)$
    \end{algorithmic}
    \label{Pseudocode_of_FODE}
\end{algorithm}

\section{Experiment}



In this section, we evaluate the proposed Fourier Ordinary Differential Equations (FODE) model against continuous models (NODE \cite{chen2018neural}, ANODE \cite{dupont2019augmented}, SONODE \cite{norcliffe2020second}, NCDE \cite{kidger2020neural}) and discrete models (FNO \cite{li2020fourier}, RNN \cite{rumelhart1985learning}, LSTM \cite{hochreiter1997long}) on time series forecasting and classification tasks.
For forecasting, we test FODE on two synthetic periodic datasets and four real physical systems: Unstable Oscillator, Forced Vibration, Lotka–Volterra, and Glycolytic Oscillator, where accurate prediction of dynamics is vital for system understanding and decision-making.
For classification, we focus on Electrocardiogram (ECG) signals, which are central to detecting cardiac abnormalities \cite{houssein2017ecg}. We use three real datasets—ECGFiveDays, ECG200, and ECG5000—from \cite{bagnall2018uea}. To ensure reproducibility, we provide an anonymous GitHub repository: \href{https://anonymous.4open.science/r/FODE_Anonymous_ICDM-6C2B/README.md}{Code}.

\subsection{Environment Setup}
For all experiments, we utilize Adam as the optimizer with a learning rate of $10^{-3}$ and a batch size of $32$. We use the ReLU as the activate function. For all the ODE-based models, we used ``Dopri5" as the ODE solver. We trained each model $1000$ epochs.
We used the Mean Squared Error (MSE) as the loss function for time series forecasting tasks and Cross-Entropy Loss for classification tasks.
To ensure reliable results, we ran each experiment three times to account for experimental variability. The vector field in all the ODE-based models is parameterized using a 3-layer MLP. These three layers have the dimension of $(F, H)$, $(H, H)$, and $(H, F)$, respectively, where the $F$ represents the number of features and $H$ represents the hidden dimensions set as $H = 16$. For a fair comparison, we conduct the FNO with one Fourier layer where $modes=2$ and $width=8$. 
All the models were implemented in Python 3.9 and realized in PyTorch.
We employed a high-performance computing server equipped with NVIDIA A100-SXM4-80GB GPUs to train and evaluate all models and perform additional analysis.

\subsection{Periodic Time Series Forecasting}

\begin{table}[h]
\centering
\caption{Test MSE ($\times 10^{-5}$) of RNN, NODE, and our method across two periodic systems.}
\label{tab:Periodic Time Series}
\resizebox{\columnwidth}{!}{
\begin{tabular}{c |c|c c c}
\toprule
& Amp value & RNN & NODE & \textbf{FODE} \\
\toprule
\multirow{2}{*}{Periodic-3D-A} 
& 0.05  & $1.51 \pm 0.12 $ & $1.83 \pm 0.24$ & $\textbf{0.91} \pm \textbf{0.03}$ \\ 
\cmidrule{2-5}
& 0.10 & $2.10 \pm 0.21 $ & $3.20 \pm 0.32$ & $\textbf{0.42} \pm \textbf{0.02}$ \\
\midrule
\multirow{2}{*}{Periodic-3D-B} & 0.05  & $2.41 \pm 0.30 $ & $2.13 \pm 0.22$ & $\textbf{0.21}  \pm \textbf{0.02} $  \\ 
\cmidrule{2-5}
& 0.10  & $10.21 \pm 1.02$ & $0.51 \pm 0.11$ & $\textbf{0.20} \pm \textbf{0.01}$ \\
\bottomrule
\end{tabular}
}
\end{table}

To explore and verify the properties of FODE, we consider two synthetic, three-dimensional (3D) time-series datasets that exhibit periodic behavior with superimposed high-frequency waves. These datasets, referred to as Periodic-3D-A and Periodic-3D-B, are designed to evaluate the performance of predictive models under varying degrees of frequency and amplitude.



\noindent \textbf{Periodic-3D.} 
This family of synthetic datasets is generated by sampling three channels over the time interval \(t \in [0, 20]\) at 1000 uniformly spaced points. Each dataset combines low-frequency base oscillations with an additional high-frequency component at \(20 \, \text{rad/s}\), scaled by an amplitude parameter $amp$. 
In \textbf{Periodic-3D-A}, the base waveforms are defined as
$
x(t) = \sin(t) + \text{amp} \times \sin(20t), \quad 
y(t) = \cos(t) + \text{amp} \times \cos(20t), \quad 
z(t) = \sin(2t) + \text{amp} \times \sin(20t),
$
where the fundamental frequencies correspond to \(1 \, \text{rad/s}\) for \(x(t), y(t)\) and \(2 \, \text{rad/s}\) for \(z(t)\). 
In \textbf{Periodic-3D-B}, the base waveforms are instead
$
x(t) = \sin(2t) + \text{amp} \times \sin(20t), \quad 
y(t) = \cos(2t) + \text{amp} \times \cos(20t), \quad 
z(t) = \cos(5t) + \text{amp} \times \sin(20t),
$
with fundamental frequencies of \(2 \, \text{rad/s}\) for \(x(t), y(t)\) and \(5 \, \text{rad/s}\) for \(z(t)\). 
In both datasets, the primary waveforms capture low-frequency dynamics while the additional terms introduce high-frequency oscillations. The resulting 3D signals are segmented into input-output subsequences, where each input consists of 10 time steps and each output consists of the subsequent 10 time steps. An 80\%-20\% chronological split is applied to create training and testing sets.
They simulate controlled, nontrivial periodic time-series data with known ground truth.

Table~\ref{tab:Periodic Time Series} summarizes the test Mean Squared Error (MSE) of RNN, NODE, and our method across two periodic systems with varying amplitude values. The results consistently demonstrate that our method outperforms the alternatives in capturing periodic trends and high-frequency components.

\begin{figure}[htb]
\centering
\includegraphics[width=0.8\linewidth]{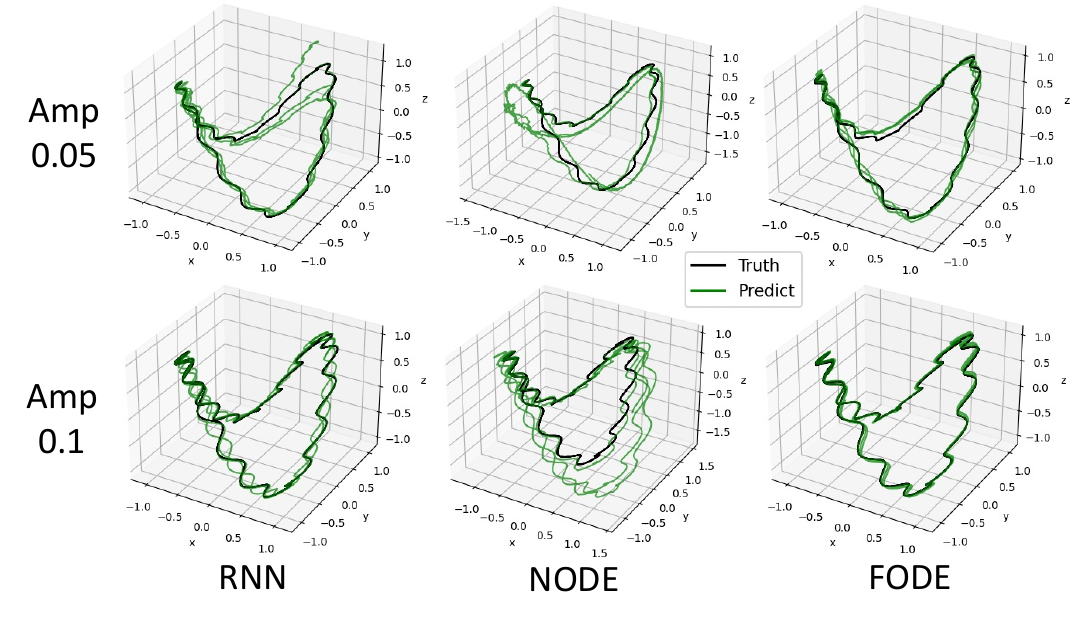}
\caption{Periodic-3D-A. Performance comparison of predictive models (RNN, NODE, and Ours) on the Periodic-3D-A dataset. The black lines represent the ground truth, while the green lines show predictions. The top row corresponds to a high-frequency amplitude of 0.05, and the bottom row corresponds to an amplitude of 0.1.}
\label{fig:Sine3D}
\end{figure}

For Periodic-3D-A (Fig.~\ref{fig:Sine3D}), increasing the high-frequency amplitude from 0.05 to 0.10 raises the MSE of RNN (from \(1.5 \times 10^{-5}\) to \(2.1 \times 10^{-5}\)) and NODE (from \(1.8 \times 10^{-5}\) to \(3.2 \times 10^{-5}\)), showing reduced robustness. In contrast, our method lowers MSE from \(0.9 \times 10^{-5}\) to \(0.4 \times 10^{-5}\), effectively handling high-frequency effects. For Periodic-3D-B (Fig.~\ref{fig:Cose3D}), degradation is sharper: RNN’s MSE jumps from \(2.4 \times 10^{-5}\) to \(10.2 \times 10^{-5}\). Our method again achieves the best results, maintaining \(0.2 \times 10^{-5}\) for both amplitudes.

\begin{figure}[htb]
\centering
\includegraphics[width=0.8\linewidth]{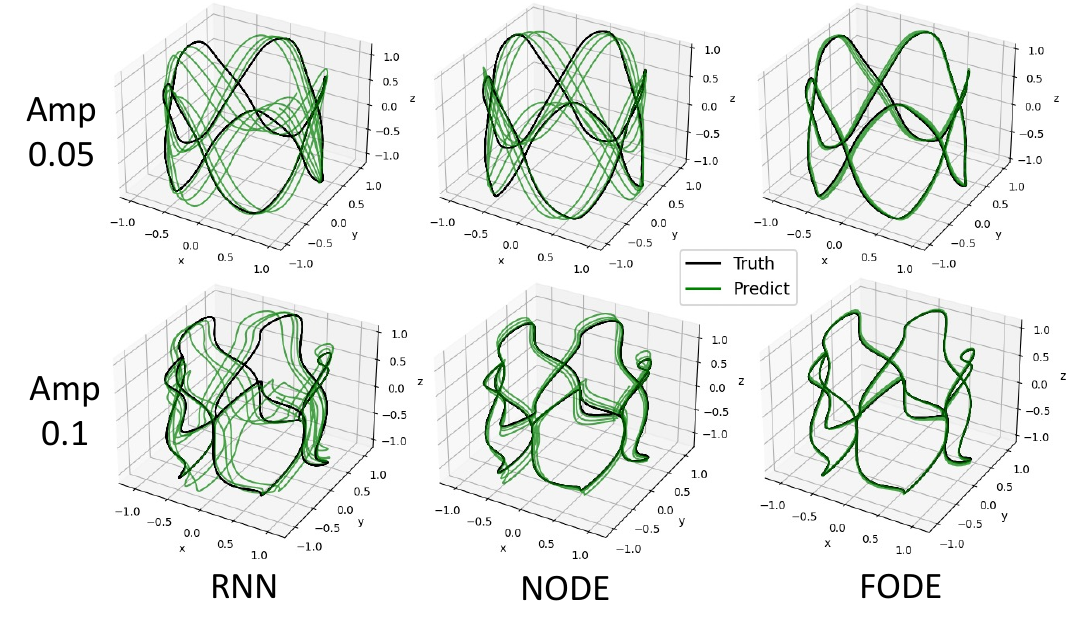}
\caption{Periodic-3D-B. Performance comparison of predictive models (RNN, NODE, and Ours) on the Periodic-3D-B dataset. The black lines represent the ground truth, while the green lines show predictions. The top row corresponds to a high-frequency amplitude of 0.05, and the bottom row corresponds to an amplitude of 0.1. }
\label{fig:Cose3D}
\end{figure}



These results confirm the robustness of our approach in separating periodic structures from high-frequency noise. The synthetic datasets (Figs.~\ref{fig:Sine3D},~\ref{fig:Cose3D}) mimic real scenarios—such as ECG monitoring, where motion or sensor artifacts introduce high-frequency waves—highlighting the importance of accurate periodic modeling in practice.

\subsection{Physical Dynamics Forecasting}
\label{Time Series Forecasting}

\begin{figure*}[htb]
\centering
\includegraphics[width=1.7\columnwidth]{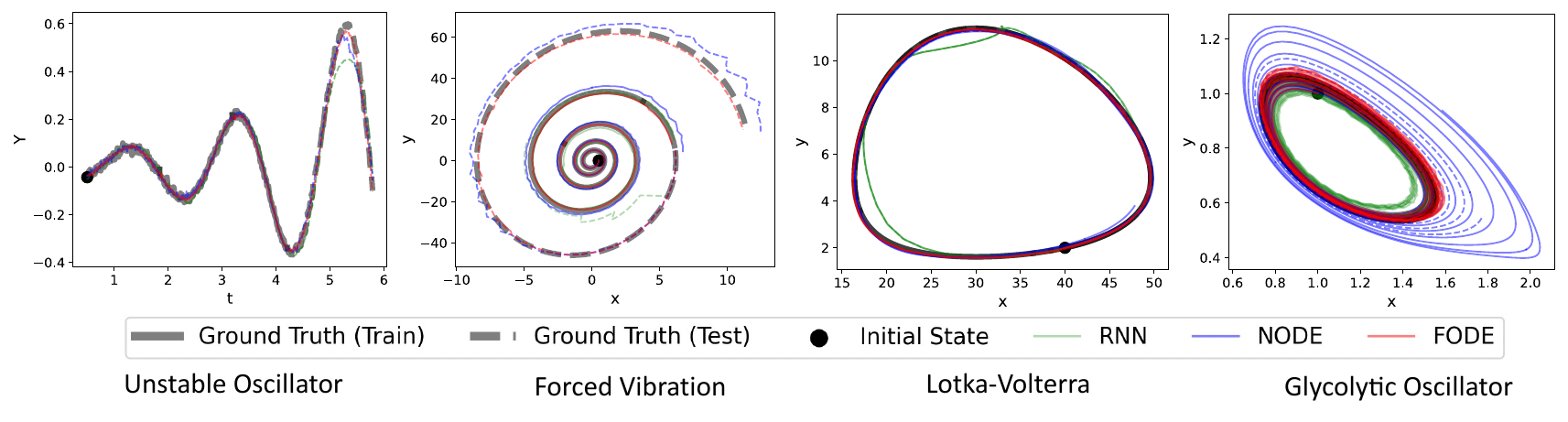}
\caption{Visual comparison of model predictions on four representative dynamical systems: Unstable Oscillator (far left), Forced Vibration (second from left), Lotka Volterra (third), and Glycolytic Oscillator (far right). The figure shows ground-truth training data (solid gray) and testing data (dashed gray), along with the initial state (black dot) and trajectories generated by RNN (green), NODE (blue), and FODE (red).}
\label{4systems}
\end{figure*}

We evaluate the performance of the proposed FODE method for time-series forecasting tasks on four real physical dynamic systems. We use multiple benchmark models including RNN, LSTM, NODE, ANODE, and SONODE.

\noindent \textbf{Unstable Oscillator.}
Unstable oscillators can arise in physical or biological contexts where a positive feedback mechanism causes the amplitude of oscillations to grow unbounded over time \cite{fradkov1998introduction}. This behavior is common in certain mechanical resonators or biological rhythms under persistent excitation. We construct a dataset that captures an exponentially growing oscillatory signal with additive noise:
\[
x(t) = 0.1\, e^{0.5t} \bigl(\cos(\pi t + 1) + \sin(\pi t - 1)\bigr) + \eta(t),
\]
where \(t \in [0,2\pi]\) is sampled at increments of 0.01, and \(\eta(t)\) is zero-mean Gaussian noise with standard deviation 0.01. As seen in Figure~\ref{4systems} (left panel), FODE accurately follows the rapid amplitude growth and captures both the exponential trend and the oscillatory phase shift more effectively than the alternative methods.

\noindent \textbf{Forced Vibration.}
Many engineered systems, such as bridges or vehicle suspensions, can experience forced vibrations when subjected to external periodic loading \cite{rao2019vibration}. We model this phenomenon using a second-order forced oscillator, which is converted into two first-order ODEs:
\[
\begin{cases}
    \dot{x} = v,\\
    \dot{v} = -2\,\zeta\,\omega_n \,v \;-\; \omega_n^2 \,x \;+\; F_0 \,\cos(\Omega t),
\end{cases}
\]
where \(\zeta = -0.1\) (negative damping ratio), \(\omega_n = 2\pi\) (natural frequency), \(F_0 = 0.1\) (forcing amplitude), and \(\Omega = 4.0\) (forcing frequency). 
We integrate from \(t = 0\) to \(t = 5\) using \(\Delta t = 0.01\), starting with \(x_0 = 0.5\) and \(v_0 = 0\). The negative damping drives the amplitude to grow over time, and Figure~\ref{4systems} (second panel) illustrates how FODE better tracks the outward spiral trajectory compared to other methods, which tend to deviate at larger radii.

\noindent \textbf{Lotka--Volterra System.}
The Lotka-Volterra equations are a pair of nonlinear ODEs that characterize the dynamics of predator-prey interaction \cite{yang2023latent}. They demonstrate how nonlinear feedback can produce sustained population cycles. We consider a Lotka-Volterra system:
\[
\begin{cases}
\dot{x} = \alpha x \;-\; \beta x\,y,\\
\dot{y} = \delta x\,y \;-\; \gamma y,
\end{cases}
\]
where \(\alpha = 0.1\), \(\beta = 0.02\), \(\gamma = 0.3\), and \(\delta = 0.01\). The system is integrated from \(t = 0\) to \(t = 100\) at 500 time points, starting from \(x_0 = 40\) and \(y_0 = 2\). As shown in Figure~\ref{4systems} (third panel), FODE preserves the correct phase and amplitude of these cyclical fluctuations, whereas some baselines drift substantially over longer forecasts.

\noindent \textbf{Glycolytic Oscillator.}
Glycolysis is a fundamental metabolic pathway in cells, and under certain conditions, it can exhibit rhythmic, oscillatory behavior due to feedback in enzymatic reactions. We use the same constants in the equations as in \cite{qian2022d}, given by
\[
\begin{cases}
\dot{x}_1 = a - b\,x_1 - x_1\,x_2^2,\\
\dot{x}_2 = b\,x_1 - x_2 + x_1\,x_2^2,
\end{cases}
\]
where \(x_1\) and \(x_2\) represent substrate and product concentrations, respectively, and \(a=0.75\), \(b=0.1\). 
The model is integrated from \(t = 0\) to \(t = 100\) with 1000 time points, starting at \(\bigl(x_1(0),\,x_2(0)\bigr) = (1.0,\,1.0)\). Figure~\ref{4systems} (right panel) demonstrates that FODE closely tracks the sustained oscillatory cycle, in contrast to other approaches that either underestimate or overestimate the oscillation radius.

Overall, FODE consistently outperforms the baselines on these four physical dynamics, achieving the lowest forecasting errors for each system, as summarized in Table~\ref{Tab_Testing_MSE_time_series_forecasting}. These results highlight the advantage of incorporating Fourier operators into neural ODE frameworks, enabling enhanced modeling of both oscillatory and unbounded dynamical behaviors.

\begin{table}[t]
\centering
\caption{Test MAPE (\%) on four dynamical systems.}
\resizebox{0.5\textwidth}{!}{
\begin{tabular}{lcccc}
\hline
& Unstable Osc. & Forced Vib. & Lotka-Vol. & Glycolytic Osc. \\
\hline
RNN & 23.31 $\pm$ 0.12 & 23.45 $\pm$ 4.31 & 18.13 $\pm$ 5.34 & 5.12 $\pm$ 0.14 \\
LSTM & 16.58 $\pm$ 0.95 & 15.49 $\pm$ 1.01 & 10.12 $\pm$ 1.05 & 4.05 $\pm$ 1.59 \\
NODE & 13.41 $\pm$ 1.33 & 18.41 $\pm$ 3.12 & 2.45 $\pm$ 0.09 & 23.59 $\pm$ 5.09 \\
ANODE & 11.13 $\pm$ 1.94 & 19.14 $\pm$ 1.23 & 3.14 $\pm$ 1.24 & 30.14 $\pm$ 1.44 \\
SONODE & 12.34 $\pm$ 0.88 & 14.19 $\pm$ 4.24 & 2.36 $\pm$ 1.25 & 29.95 $\pm$ 2.45 \\
\hline
\textbf{FODE} & \textbf{8.98 $\pm$ 1.21} & \textbf{1.34 $\pm$ 0.61} & \textbf{1.87 $\pm$ 0.09} & \textbf{0.51 $\pm$ 0.04} \\
\hline
\end{tabular}
\label{Tab_Testing_MSE_time_series_forecasting}
}
\end{table}

\begin{table*}[htbp]
  \centering
  \caption{Test performance on time–series tasks.  
           Left block: forecasting (MAPE\,\%\,\,$\pm$\,std).  
           Right block: classification (MSE\,\,$\pm$\,std).}
  \label{tab:forecast_vs_classification}
  \small
  \setlength{\tabcolsep}{4.5pt}
  \resizebox{\textwidth}{!}{%
  \begin{tabular}{lccccc|ccc}
    \toprule
    & \multicolumn{5}{c|}{\textbf{Forecasting (MAPE\,\% $\pm$ std.)}} 
    & \multicolumn{3}{c}{\textbf{Classification (MSE $\pm$ std.)}} \\
    \cmidrule(lr){2-6}\cmidrule(lr){7-9}
    \textbf{Method} &
        Spanish Load & Building Load & Building Temp. & Spanish Temp. & ECG200 &
        ECGFiveDays & ECG200 & ECG5000 \\
    \midrule
        \textbf{FODE} & $\mathbf{0.83\pm0.05}$ & $\mathbf{(0.98\pm0.10)\times10^{-3}}$ & $\mathbf{6.41\pm0.29}$ & $\mathbf{7.16\pm0.24}$ & $\mathbf{12.48\pm0.41}$ &
        $\mathbf{0.114\!\pm\!0.072}$ & $\mathbf{0.320\!\pm\!0.015}$ & $\mathbf{0.211\!\pm\!0.029}$ \\
    NODE   & $3.06\pm0.10$ & $(1.00\pm0.05)\times10^{-2}$ & $14.95\pm0.33$ & $9.47\pm0.21$ & $35.09\pm0.70$
           & $0.192\!\pm\!0.034$ & $0.377\!\pm\!0.005$ & $0.254\!\pm\!0.010$ \\
    ANODE  & $3.18\pm0.10$ & $(2.54\pm0.08)\times10^{-3}$ & $6.87\pm0.23$ & $8.71\pm0.22$ & $17.43\pm0.35$
           & $0.189\!\pm\!0.027$ & $0.393\!\pm\!0.021$ & $0.253\!\pm\!0.008$ \\
    SONODE & $2.09\pm0.07$ & $(2.00\pm0.05)\times10^{-2}$ & $11.15\pm0.36$ & $7.41\pm0.27$ & $30.61\pm0.68$
           & $0.178\!\pm\!0.040$ & $0.352\!\pm\!0.003$ & $0.247\!\pm\!0.001$ \\
    RNN    & $1.61\pm0.05$ & $(1.00\pm0.05)\times10^{-2}$ & $14.83\pm0.33$ & $23.64\pm0.44$ & $12.61\pm0.37$
           & $0.538\!\pm\!0.001$ & $0.583\!\pm\!0.002$ & $0.383\!\pm\!0.015$ \\
    LSTM   & $1.69\pm0.05$ & $(1.24\pm0.04)\times10^{-3}$ & $21.59\pm0.49$ & $29.78\pm0.64$ & $25.43\pm0.51$
           & $0.523\!\pm\!0.044$ & $0.605\!\pm\!0.010$ & $0.318\!\pm\!0.018$ \\
    FNO    & $2.94\pm0.09$ & $(6.10\pm0.20)\times10^{-1}$ & $27.90\pm0.67$ & $10.60\pm0.20$ & $99.43\pm4.04$
           & $0.170\!\pm\!0.101$ & $0.330\!\pm\!0.011$ & $0.269\!\pm\!0.020$ \\
    NCDE   &   -- & -- & -- & -- & -- 
           & $0.714\!\pm\!0.022$ & $0.623\!\pm\!0.013$ & $0.913\!\pm\!0.004$ \\
    \bottomrule
  \end{tabular}}
\end{table*}

\subsection{Time Series Classification}

We evaluate FODE on time series classification, comparing it with the same baselines as in Section~\ref{Time Series Forecasting}, along with NCDE~\cite{kidger2020neural} and FNO~\cite{li2020fourier}. 
Experiments are conducted on three standard ECG datasets that vary in length, categories, and underlying conditions.

\noindent \textbf{ECGFiveDays.} This dataset~\cite{hu2013time} contains ECG signals from a 67-year-old male recorded on two dates, five days apart, forming two classes. The task is to distinguish between the sessions based on subtle temporal variations. FODE achieves the lowest MSE (Table~\ref{tab:forecast_vs_classification}), showing strong sensitivity to these differences.

\noindent \textbf{ECG200.} ECG200~\cite{olszewski2001generalized} consists of single-heartbeat signals labeled as normal or myocardial infarction, providing a binary classification problem. Despite short sequences, successful modeling requires capturing morphological patterns. FODE performs competitively, and notably the variant without filter $K$ achieves the lowest MSE (Table~\ref{tab:forecast_vs_classification}).

\noindent \textbf{ECG5000.} Derived from a 20-hour recording of a heart failure patient in CHFDB~\cite{goldberger2000physiobank}, ECG5000 contains 5,000 normalized beats of length 140, categorized into five classes. Its diversity and multiple categories make the classification task more challenging compared to ECGFiveDays and ECG200.

\subsection{Time Series Forecasting}
We also test our model on two load datasets (Spanish Load and  Building Load) and two temperature datasets (Spanish Temp. and Building Temp.), each exhibiting strong periodic patterns. These datasets, collected from public sources such as Kaggle, pose forecasting challenges due to daily and seasonal cycles, noise, and occasional outliers.

The Spanish Load dataset contains hourly power demand records for Spain, while the Building Load and Building Temperature datasets capture consumption and indoor climate readings from commercial buildings. The Spanish Temperature dataset tracks nationwide meteorological conditions. Across all four datasets, FODE consistently outperforms baseline methods, achieving the lowest MAPE and standard deviation (see Table~\ref{tab:forecast_vs_classification}). This indicates FODE’s ability to model periodic structures and temporal dependencies effectively, even in the presence of noise and heterogeneity.

\begin{figure}[t]
\centering 
\includegraphics[width=0.9\columnwidth]{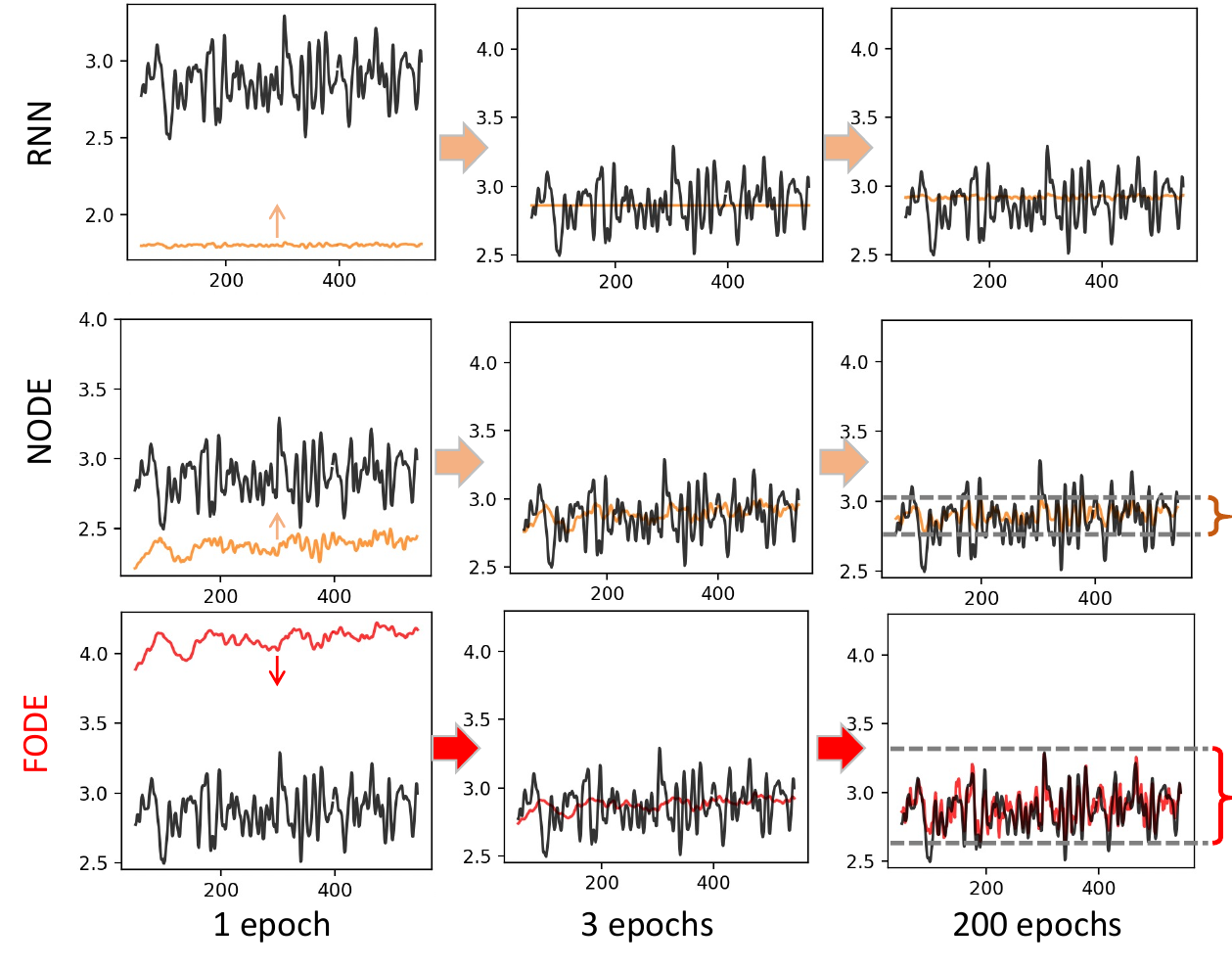}
\caption{Predicted trajectories (colored) vs. ground truth (black) at 1, 3, and 200 epochs. RNN fails to capture the global pattern. NODE improves learning but shows convergence bias. FODE quickly recovers both local details and the global pattern, achieving stable, accurate predictions.}
\label{Forecasting_Process}
\end{figure}

Figure~\ref{Forecasting_Process} illustrates the learning process of RNN, NODE, and FODE across training epochs. At early stages (1 epoch), RNN fails to learn meaningful dynamics, producing flat outputs. NODE starts capturing trends but suffers from convergence to biased regions. In contrast, FODE exhibits rapid adaptation, initially producing a rough global trend and subsequently refining local details. By 200 epochs, FODE not only matches the ground truth closely but also maintains amplitude fidelity and trend consistency—underscoring its strength in capturing both global patterns and fine-grained variations in time series.

\subsection{Hidden State Analysis of FODE}
To gain deeper insights into how FODE processes signals, we perform a short-time Fourier transform (STFT) \cite{griffin1984signal} on the model’s hidden states at various training epochs. By examining the hidden states in the frequency domain, we can observe how FODE dynamically transforms input signals throughout the training process.

Figure~\ref{freq_change} shows four representative samples, each row corresponding to a different input signal. Within each row, the columns display the STFT results for that signal’s hidden state as training progresses from left to right. Initially, the spectrogram exhibits a broad distribution of energy across different frequency bands. As the model learns, the energy appears to reorganize or concentrate in specific frequency regions, reflecting how FODE re-weights and reshapes the frequency content to optimize its predictive objective. By the later epochs (rightmost plots), the hidden states exhibit more refined frequency profiles, suggesting that the model converges to specialized representations for each sample.

\begin{figure}[t]
\centering 
\includegraphics[width=1\columnwidth]{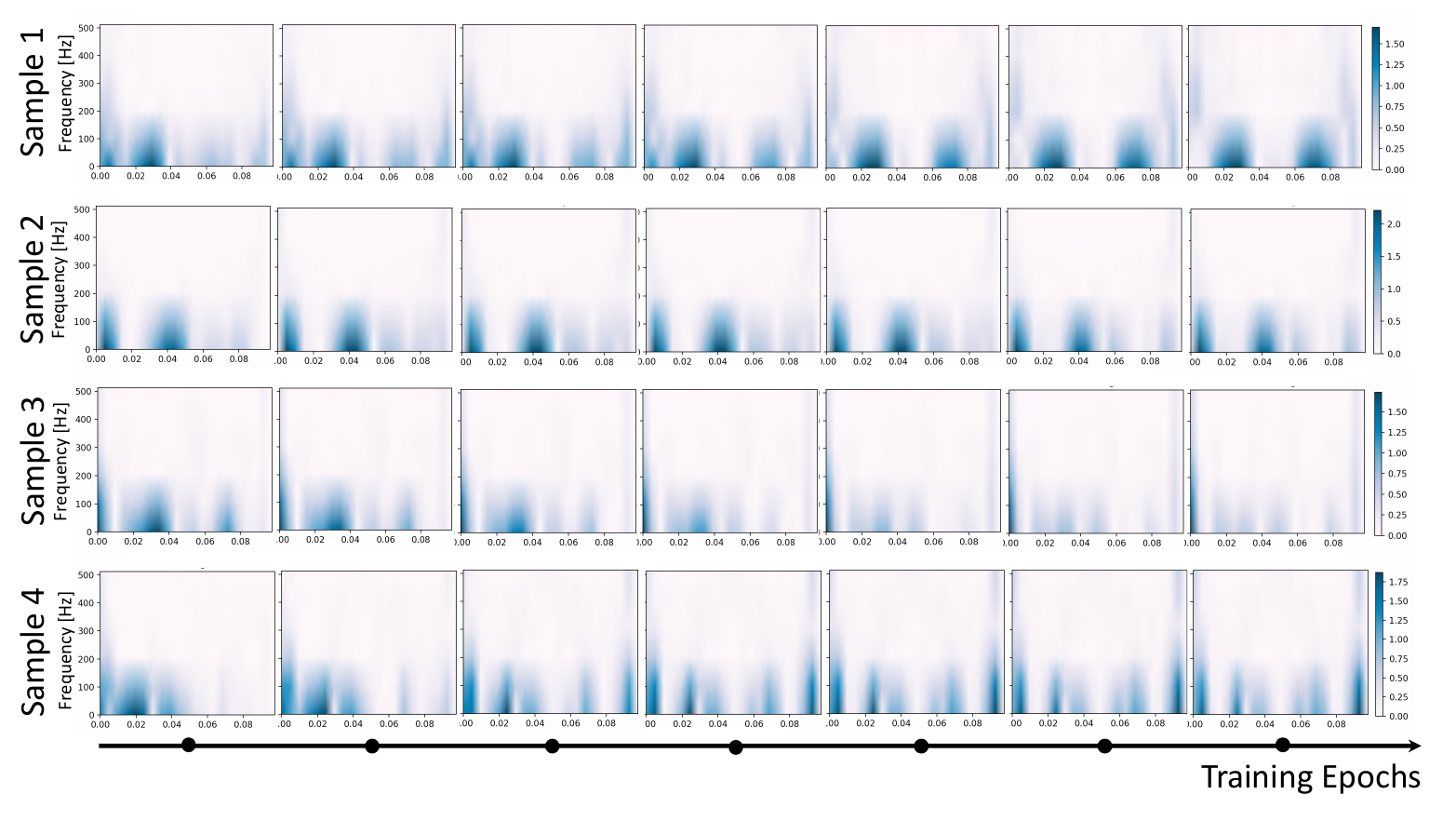}
\caption{Short-time Fourier transform (STFT) applied to FODE’s hidden state over training epochs, for four sample signals (one row per sample). From left to right in each row, the spectrogram evolves as the model trains, illustrating how FODE re-weights and reshapes frequency components in its hidden representation.}
\label{freq_change}
\end{figure}

\subsection{Ablation Study}

\begin{table}[htbp]
  \centering
  \caption{Ablation study: effect of introducing $K$ in FODE. The last column shows the relative reduction obtained with FODE.}
  \label{tab:ablation_fode}
  \small
  \setlength{\tabcolsep}{7pt}
  \resizebox{1\linewidth}{!}{
  \begin{tabular}{l l c c c}
    \toprule
    \textbf{Application} & \textbf{Dataset} & \textbf{FODE w/o $K$} & \textbf{FODE} & \textbf{MAPE\,Decrease} \\
    \midrule
    Power   & Spanish Load      & 2.36\%  & 0.83\%  & \textcolor{brown}{64.83\% $\downarrow$} \\
    Weather & Building Temp     & 7.95\%  & 6.41\%  & \textcolor{brown}{19.37\% $\downarrow$} \\
    Health  & ECG200            & 14.30\% & 12.48\% & \textcolor{brown}{12.73\% $\downarrow$} \\
    Physics & Forced Vib.  & 1.39\%  & 1.34\%  & \textcolor{brown}{\phantom{0}3.60\% $\downarrow$} \\
    \bottomrule
  \end{tabular}
  }
\end{table}

To assess the contribution of the Fourier-domain filter \(K\) in our model, we conduct an ablation study by comparing FODE with and without 
\(K\) across diverse domains, including power systems, weather forecasting, healthcare, and physical dynamics. Table~\ref{tab:ablation_fode} reports the Mean Absolute Percentage Error (MAPE) for both variants and the corresponding relative improvement.

The results demonstrate that introducing \(K\) improves performance across all tasks. The most notable gain is observed on the Spanish Load dataset, where MAPE drops by over $64\%$. Similarly, meaningful improvements are seen in the Building Temperature and ECG200 datasets, highlighting the role of \(K\) in capturing complex periodic and morphological patterns. Even in the physics domain (Forced Vibration), a modest improvement is observed, indicating \(K\)’s broad applicability. The ablation confirms that the learnable filter \(K\) enhances the model’s ability to adaptively suppress noise and emphasize relevant frequency components, thereby refining both the global structure and local details of time series predictions.



\subsection{Evolution of filter K with different initialization}
We investigate how the filter \(K\) evolves over the course of training when initialized in three ways: (1) all-zero entries, (2) all-one entries, and (3) Xavier uniform initialization \cite{glorot2010understanding}. Figure~\ref{Filter_K_change} provides a color-coded visualization of the per-epoch changes in \(K\). Each subfigure shows the evolving weight values (horizontal axis) and the corresponding training loss (vertical axis), enabling a direct comparison of how different initializations influence the filter’s learning dynamics.

As shown in Figure~\ref{Filter_K_change}, the zero initialization keeps weights near zero in early epochs, requiring steeper adjustments before convergence. Starting from ones biases the model toward uniformly positive weights, which can lead to large updates as training progresses. By contrast, Xavier initialization balances positive and negative initial values, yielding more gradual color shifts and suggesting a smoother training trajectory. Despite these differences, all three strategies successfully adapt \(K\) to reduce the loss, highlighting the model’s flexibility in incorporating different initialization schemes.

\begin{figure}[htb]
\centering
\includegraphics[width=1\columnwidth]{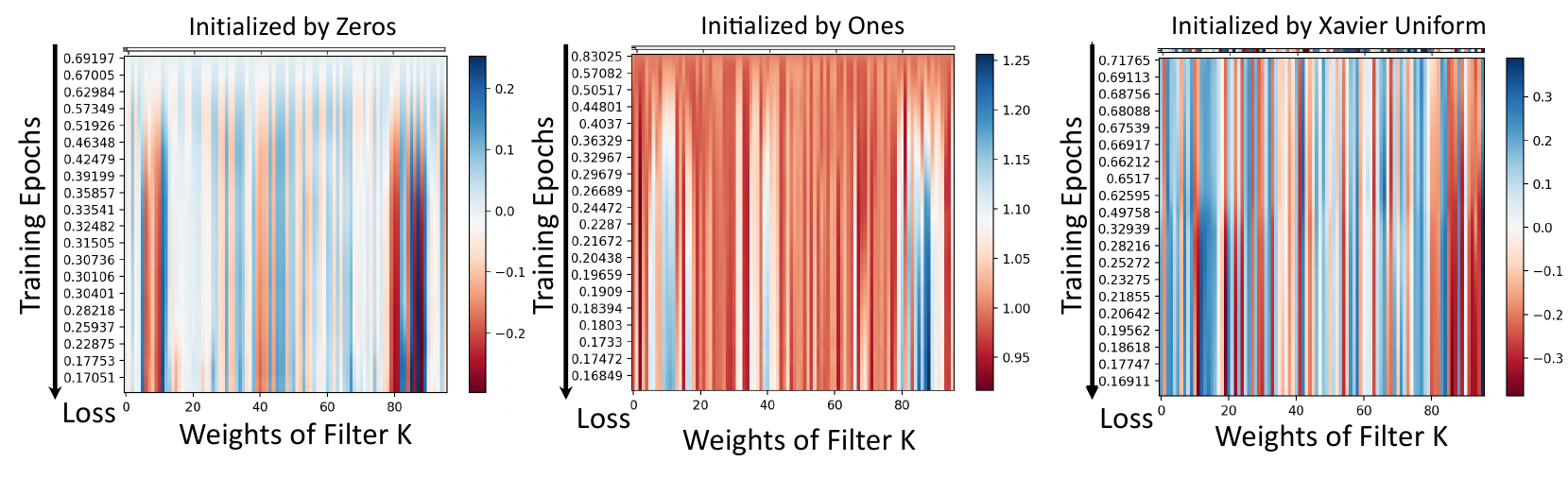}
\caption{Evolution of the filter \(K\) over training epochs, under three different initialization schemes: zeros (left), ones (middle), and Xavier uniform (right). 
Rows correspond to training iterations, with the vertical axis showing the loss (from higher at the top to lower at the bottom). The color represents the filter weight values, transitioning from negative (blue) to positive (red).
}
\label{Filter_K_change}
\end{figure}

\section{Conclusion}
In this work, we introduced FODE, an ODE-based model that leverages the Fourier domain to learn dynamics and enhance the representation of time series data. By operating in the Fourier domain, FODE can effectively capture underlying periodic patterns, surpassing the capabilities of traditional continuous models. The incorporation of an element-wise filter maintains granularity while enabling generalization. Experimental evaluations on various time series datasets demonstrated the superior performance of FODE.

\bibliography{IEEEexample}
\bibliographystyle{IEEEtran}

\end{document}